\newtheorem{prob-statement}{Problem}
\newtheorem{lemma}{Lemma}
\DeclareMathOperator*{\argmin}{arg\,min}
\DeclareMathOperator*{\minimize}{minimize}
\DeclareMathOperator*{\st}{subject \ to}
\begin{document}

%\title{Strategic Data Acquisition to Learn Idiosyncratic Choice Preferences across Multiple Attributes}
%\title{Signaling-based Estimation of Multi-Attribute Choice Preferences via Matrix Factorization}
\title{On Estimating Multi-Attribute Choice Preferences using Private Signals and Matrix Factorization}

\author{Venkata Sriram Siddhardh Nadendla and Cedric Langbort}% <-this % stops a space

\author{
\IEEEauthorblockN{{\large Venkata Sriram Siddhardh Nadendla and Cedric Langbort}}% <-this % stops a space
\\[1ex]
\IEEEauthorblockA{Coordinated Science Laboratory, \\ 
University of Illinois at Urbana-Champaign. \\
%Urbana, IL \\ 
Email: \{nadendla, langbort\}@illinois.edu}
}

\maketitle

\begin{abstract}
Revealed preference theory studies the possibility of modeling an agent's revealed preferences and the construction of a consistent utility function. However, modeling agent's choices over preference orderings is not always practical and demands strong assumptions on human rationality and data-acquisition abilities. Therefore, we propose a simple generative choice model where agents are assumed to generate the choice probabilities based on latent factor matrices that capture their choice evaluation across multiple attributes. Since the multi-attribute evaluation is typically hidden within the agent's psyche, we consider a signaling mechanism where agents are provided with choice information through private signals, so that the agent's choices provide more insight about his/her latent evaluation across multiple attributes. We estimate the choice model via a novel multi-stage matrix factorization algorithm that minimizes the average deviation of the factor estimates from choice data. Simulation results are presented to validate the estimation performance of our proposed algorithm.
\end{abstract}
%
%\begin{keywords}
%Nonparametric Choice Model, Sparsity, Learning.
%\end{keywords}

%================================================
%************************************************
%------------------------------------------------
%************************************************
%================================================
\section{Motivation}

In this paper, we consider the problem of modeling and estimating choice preferences across multiple attributes in a non-parametric manner via active data acquisition, where private signals are used to learn hidden preferences across multiple attributes. Such designs will significantly reduce search friction in current recommendation systems which are designed to model the average behavior of a population of agents. Furthermore, such choice models can be used in modern cyber-social-physical systems (e.g. smart transportation systems), where persuasive signals can be designed to steer agents' decisions towards a social optimum. 

One of the first attempts to model human decisions was made by Samuelson in his seminal work in \cite{Samuelson1938} in 1938 using \emph{the theory of revealed preferences}. This model is based on the idea that human agents choose decisions to maximize their utility functions. Later, several models have been proposed with utility-maximization as a cornerstone philosophy to characterize human decisions. Some well-accepted models in the literature are Von Neumann and Morgenstern's expected utility theory (EUT) \cite{Book-VonNeumann}, Herbert Simon's \emph{bounded rationality} models \cite{Book-Rubinstein1998} and random utility based discrete choice models \cite{Book-Train}. However, the problem of finding utility functions that are consistent with the revealed choices remained open until Afriat proposed a constructive algorithm in \cite{Afriat1967} to compute utility functions from finite choice revelations. 

Several parametric choice models have been proposed to capture a person's choice evaluation based on multiple attributes in the past literature. The characterization of multi-attribute choice models can be broadly classified into two types, depending on how evaluations across multiple attributes are combined together by the agent before picking a choice. In the first type of models, utility functions across multiple attributes are linearly combined together using appropriate weights, which are estimated from data. Such models have been extensively studied in the context of parametric models, with one of the most accepted models being the conditional multinomial logit (MNL) model proposed by McFadden in \cite{McFadden1973}. Alternatively, in the second type of models, conditional preferences across multiple attributes are combined in the realm of choice probabilities. Some examples include generalized extreme-value (GEV) models \cite{Book-Train} and the mixture-MNL (MMNL) model \cite{Rusmevichientong2014,Kallus2016}. 

Although utility maximization has provided a tractable framework to design systems for human agents, several experiments have been documented in the psychology literature where human agents demonstrate deviating behaviors from the framework of utility maximization. For example, principles of \emph{transitivity} \cite{Tversky1969} and \emph{substitutability} \cite{Tversky1969b} have been violated by human agents under various choice settings. Therefore, we avoid the utilitarian regime and other structural assumptions considered traditionally to model human decisions, and propose a non-parametric choice probability model where the agent is assumed to evaluate choices based on controller's signals which provide information about choices in terms of multiple attributes. Such a choice model facilitates the design of signaling mechanisms that strategically influence agents' decisions in a desired manner. 

The design of signaling mechanisms have a wide range of applications. For example, consider the example of smart transportation system (Alice), where a commuter (Bob) chooses a route from the set of all routes that connect start and destination nodes. On the other hand, the transportation agencies (e.g. GPS industry) are interested in controlling network congestion via providing traffic signals. While public signals are constructed to serve a population of commuters, private signals are designed to leverage an individual's cognitive preferences across the set of all routes. Since private signals outperform public signals in most practical cases \cite{Hellwig2002}, there is a greater potential to influence commuters and control traffic congestion via private signaling. 

%For example, consider the framework of Netflix (Alice), where a customer (Bob) picks movies based on his preferences over different kinds of movie genres. Given the huge customer-base of Netflix, recommendations are constructed using a matrix completion framework that identifies a customer with a group of agents with similar preferences across movie genres (multiple attributes). Such recommendations resemble public signals, which can be outperformed by private signals that are specially designed to serve the agents individually. Therefore, we present a non-parametric multi-attribute choice model and a corresponding estimation procedure that facilitates the construction of effective private signals/recommendations, which have a greater potential to nudge human agents than using public signals in most practical cases \cite{Hellwig2002}.

\section{Literature Survey \& Contributions \label{sec: Contributions}}

%The main contributions of this paper are two-fold. First, we propose a matrix factorization based non-parametric choice model, where the factorization is carried out to analyze the agent's evaluation across multiple attributes. Our second contribution is a stochastic matrix factorization algorithm that is based on dynamic programming. 

Non-parametric choice models have been proposed by several researchers in the past. The most well-known one is the rank-based choice model where the agent is assumed to pick a choice with the highest rank. The agent's decisions are modeled using a demand function which is characterized by the probability mass function over the set of all possible orders of choice probabilities. This model was first proposed by Block and Marschak \cite{Block1960} in 1960, and was later studied and analyzed in various applications. The caveat in this model is that it is practically impossible to observe the entire choice ordering in any decision-theoretic context. A possible remedy for this issue was provided by Jagabathula and Shah in \cite{Jagabathula2011}, where they have considered the problem of modeling choice orders in the presence of constrained data. Later, Farias \emph{et al.} have used this choice model to solve a revenue prediction problem in \cite{Farias2012,Farias2013}. However, it is not always practical to obtain choice rankings from human decision makers. For example, such queries are not relevant to the decision framework and can unnecessarily irk human agents. 
%Furthermore, such a choice model also does not account for agent's preferences across attributes. 

Therefore, we consider a choice model where we are concerned with revealed choices rather than preference orderings. This is primarily because preference orders cannot be practically observed in every decision-theoretic framework, and is also not needed in system which only rely on choice probabilities. Furthermore, people evaluate choices across multiple attributes using different utility functions which need not be compatible with each other. In an attempt to address all these issues, we assume that the choice preferences (conditioned on the choice information signaled strategically to the agent) are characterized by a matrix factorization model based on stochastic matrices, where the matrix factors capture the agent's choice evaluation across multiple attributes. In order to find the optimal matrix factors in our choice model, we also propose a novel multi-stage estimation procedure which relies on choice revelations that are collected after the agent receives the strategic signal regarding choice information.

%the agent's decisions are assumed to having derived from a conditional probability matrix, which is evaluated based on his/her revealed preferences after he/she has received a signal that presents choice information in a strategic manner. In order to learn this hidden evaluation across multiple attributes, we propose a matrix-factorization based estimation procedure. 

Matrix factorization has been a very active topic for several decades, and has emerged as a powerful tool to analyze clusters and other features in datasets. In contrast to traditional approaches such as singular value decomposition (SVD) and LU decomposition, the framework of non-negative matrix factorization (NMF) 
%was first studied by Lee and Seung in \cite{Lee1999}, where they have investigated the possibility of analyzing images/text using matrices with non-negative features. Later, NMF 
has become a versatile tool for dimensionality reduction and has been used in various contexts and applications. For more details, the readers may refer to a comprehensive review on NMF in \cite{Wang2013b}. Stochastic matrix factorization (SMF) is a constrained NMF, where a stochastic matrix is approximated as a product of two stochastic matrices. Although several generative probabilistic models (e.g. Latent Dirichlet Allocation models \cite{Blei2003}) have been studied using various inference methods, the application of SMF for estimating these generative models was first proposed by Arora \emph{et al.} in the context of topic modeling in \cite{Arora2012} and have studied uniqueness in SMF in the presence of separability conditions. More recently, Adams has studied the SMF problem in \cite{Adams2016}, and investigated necessary and sufficient conditions on the observed data for a unique factorization. In addition, Adams has also derived bounds on the parameters for any observed data and presented a consistent least squares estimator. For details about the various SMF algorithms proposed in literature, the reader may refer to \cite{Luo2017} and references within.

The main drawback of SMF is that the factors are not necessarily unique in general. Although the authors in \cite{Arora2012} and \cite{Adams2016} present conditions for the existence of a unique solution, these conditions need not necessarily hold true in general, in the context of choice modeling. Therefore, uniqueness of our matrix factorization approach is not guaranteed, and can affect the estimation performance. This analysis is beyond the scope of this paper, and will be considered in our future work. Instead, we focus our attention on our novel estimation procedure that finds stochastic factors in our choice model with the aid of information signaling in a greedy fashion. 

\section{Choice Model \& Problem Setup \label{sec: Model}}
Consider two interacting agents, Alice and Bob, where Bob picks choices from a choice set $\mathbb{C} = \{ 1, \cdots, K \}$ independently across time, and Alice's goal is to evaluate Bob's choice model based on conditional preferences across a given set of attributes $\mathbb{A} = \{1, \cdots, L\}$. 
% as shown in Figure \ref{Fig: model}. Let Bob be a memory-less decision maker, who picks a top choice $c_t$  independently at time $t \in \mathbb{T} = \{ 1, \cdots, T \}$ from the choice set $\mathbb{C} = \{ 1, \cdots, K \}$. In other words, Bob's decision $c_t$ at time $t \in \mathbb{T}$ is statistically independent from his decision $c_s$ at time $s \in \mathbb{T}$ for any $t \neq s$. On the other hand, Alice's goal is to learn Bob's conditional preferences over a given set of attributes $\mathbb{A} = \{1, \cdots, L\}$. For example, consider a recommendation system employed by Netflix (Alice), where the customer (Bob) picks movies based on his preferences over different kinds of movie genres. In order to provide effective recommendations, we believe that any recommendation system should learn the customer's preferences across multiple attributes. However, since the recommendation systems available in the market are designed to learn multi-attribute choice models over a population of agents, they does not represent any given customer's preferences.
%\begin{figure}[!t]
%	\centering
%    \includegraphics[width=0.45\textwidth]{./figures/model_general.pdf}
%    \caption{Interactions between Alice and Bob in our Proposed Learning Model}
%    \label{Fig: model}
%\end{figure}
We assume that Alice constructs private signals regarding the choice set across all possible subsets of the attribute set $\mathbb{A}$. These signals can be represented as a message matrix $\mathbb{M}$ as shown in Figure \ref{Fig: message matrix}, where each column corresponds to choice information based on a specific attribute subset $A \subseteq \mathbb{A}$. Based on this assumption, we propose an active data acquisition procedure where Alice constructs a message $m(A)$ for Bob based on a chosen attribute subset $A \subseteq \mathbb{A}$, by choosing the corresponding column in the matrix $\mathbb{M}$. For the sake of clarity, we denote the message set as $\mathbb{M} = \{m_1, \cdots, m_M \}$, although each message in $\mathbb{M}$ can be uniquely mapped to a subset of $\mathbb{A}$. This notation is employed to differentiate our message labels from the labels of attribute subsets, even though $M = 2^L$. Furthermore, we also assume that Alice sends a null message to Bob (denoted as $m_1$) when $A_1 = \phi$. 

\begin{figure}[!t]
	\centering
    \includegraphics[width=0.33\textwidth]{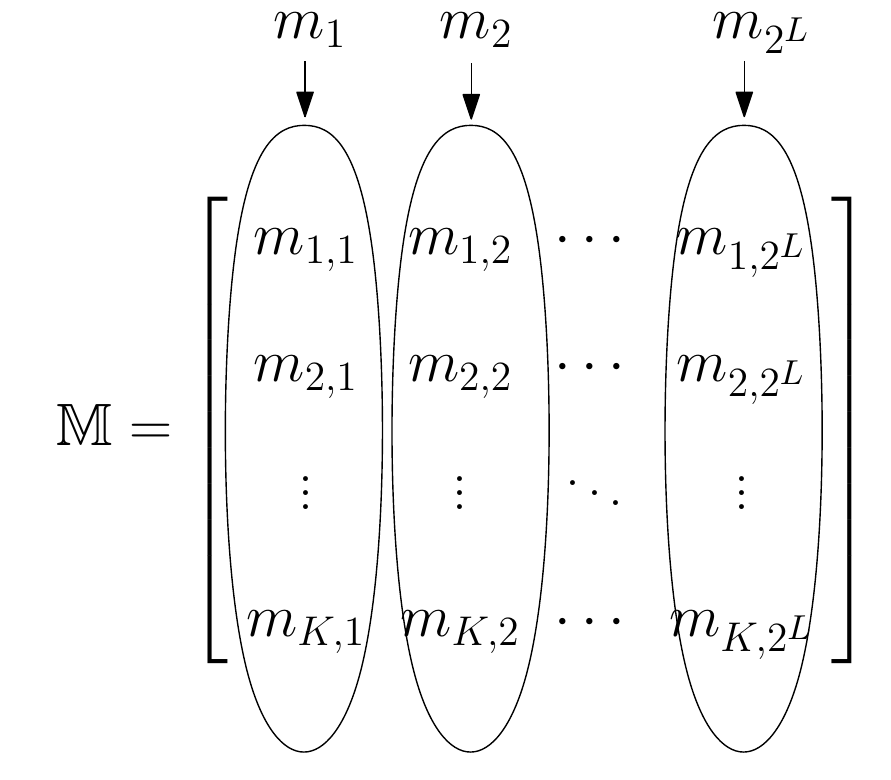}
    \caption{Signaling scheme employed by Alice}
    \label{Fig: message matrix}
\end{figure}

%based on a conditional distribution $\psi_{A,m} = \mathbb{P}(m_t = m | A)$ for some chosen attribute set $A \subseteq \mathbb{A}$. 
%via constructing its probability distribution as
%$$\mathbb{P}(m_t = m) = \displaystyle \sum_{A \in 2^{\mathbb{C}}} \mathbb{P}(m_t = m| A) \cdot \mathbb{P}(A), \quad \text{if } m \in A,$$ 
%and $0$ otherwise. We assume that the attribute set $A$ is chosen uniformly at random from the power set $2^{\mathbb{C}}$, i.e., $\mathbb{P}(A) = 2^{-K}$. 
Having received a message signal $m_t = m$, Bob picks a choice $c_t = c$ with probability $p_{m,c} = \mathbb{P}(c_t = c | m_t = m)$, where $p_{m,c}$ is computed as
\begin{equation}
	p_{m,c} = \displaystyle \sum_{A \in 2^{\mathbb{A}}} w_{m,A} \cdot q_{A,c}
	\label{Eqn: choice-model}
\end{equation}
for all $c \in \mathbb{C}$, where $w_{m,A} = \mathbb{P}(A | m_t = m)$ is the weight that Bob assigns to the attribute set $A \subseteq \mathbb{A}$ based on the received message $m$, and $q_{A,c} = \mathbb{P}(c_t = c | A)$ is the probability that Bob would have preferred the choice $c$ based on the attribute set $A \subseteq \mathbb{A}$. The choice model in Equation \eqref{Eqn: choice-model} can also be presented equivalently in matrix form as 
\begin{equation}
	P = W Q,
	\label{Eqn: choice-model-matrix-form}
\end{equation}
where $P$ is a $M \times K$ stochastic matrix, $W$ is a $M \times 2^L$ stochastic matrix and $Q$ is a $2^L \times K$ stochastic matrix. In other words, if $u_k$ is a vector of $k$ ones for any positive integer $k$, we have
\begin{equation}
	P u_K = u_{M}, \ W u_{2^L} = u_{M}, \mbox{ and } Q u_K = u_{2^L}.
\end{equation}

We assume that the data acquisition experiment is designed in a sequence of stages, where Alice picks an attribute set $A_i$ in the $i^{th}$ stage based on an increasing order of the size of attribute subsets in $\mathbb{A}$, until she exhausts all the subsets of $\mathbb{A}$. Although it does not matter which one is chosen between two subsets of same size, we assume 
%that the attribute subsets of same size are chosen in a lexicographical order. In other words, we have
\begin{equation}
\begin{array}{lcl}
	A_1 = \phi
	\\[1ex]
	A_2 = \{1\}, & \cdots, & A_{L+1} = \{L\}
	\\[1ex]
	A_{L+2} = \{1,2\}, & \cdots, & A_{L^2+1} = \{L-1, L\}
	\\
	\vdots
	\\
	A_{2^L} = \mathbb{A}.
\end{array}
\label{Eqn: Attribute-Subset-Ordering}
\end{equation}
%This ordering is inspired from the hierarchical structure that exists within any power set, which is based on the attribute entries contained in each subset. For the sake of illustration, we present an example attribute set with three attributes in Figure \ref{Fig: subset hierarchy}.

%\begin{figure}[!t]
%	\centering
%    \includegraphics[width=0.35\textwidth]{./figures/subset-hierarchy.pdf}
%    \caption{Hierarchical structure in the attribute power set}
%    \label{Fig: subset hierarchy}
%\end{figure}

Generally, Bob evaluates the choices over all possible attribute sets in $2^{\mathbb{A}}$, since Alice does not reveal the attribute set $A$ based on which recommendations are constructed. However, in this paper, we assume that Alice reveals the attribute set to Bob at every stage. Since Bob has the autonomy to either completely/partially accept Alice's recommendation, or even ignore it altogether, our choice model in Equation \eqref{Eqn: choice-model} reduces to
\begin{equation}
	p_{m,c} = \displaystyle \sum_{B \subseteq A_i} w_{m,B} \cdot q_{B,c},
	\label{Eqn: choice-model-2}
\end{equation}
where Alice constructs its message $m$ based on the subset $A_i$ in the $i^{th}$ stage, and $B$ is a subset of $A_i$ over which Bob may evaluate its choice $c$. Furthermore, in the matrix representation of the choice model as given in Equation \eqref{Eqn: choice-model-matrix-form}, the weight matrix in Equation \eqref{Eqn: choice-model-matrix-form} takes the following lower-triangular form at the end of the data-acquisition process:
\begin{equation}
	W = \left[
	\begin{array}{c c c c}
		1 & 0 & \cdots & 0
		\\
		w_{2,1} & 1 - w_{2,1} & \cdots & 0
		\\
		\vdots & \vdots & \ddots & \vdots
		\\
		w_{M,1} & w_{M,2} & \cdots & 1 - \displaystyle \sum_{i = 1}^{2^L} w_{M,i}
	\end{array}
	\right],
	\label{Eqn: W structure}
\end{equation} 
with $w_{i,j} = 0$ if $A_j \nsubseteq A_i$.

Alice's goal is to find the factor-estimates $\hat{W}$ and $\hat{Q}$ for given data-set $\mathbb{P} = \{P(1), \cdots, P(N)\}$ of $N$ stochastic matrices, that best fits to the choice model in Equation \eqref{Eqn: choice-model-matrix-form}. In practice, Alice can obtain the data-set $\mathbb{P}$ by partitioning the revealed preference dataset into $N$ partitions, and estimating $P(n)$ using the data from the $n^{th}$ partition for all $n = 1, \cdots, N$. Note that the signaling mechanism and the procedure of computing $\mathbb{P}$ are abstracted out from the rest of this paper, in order to focus on the estimation of $Q$ and $W$.

Let us define the \emph{average deviation} from the model parameters $Q$ and $W$ to the data set $\mathbb{P}$ as
\begin{equation}
	f(W,Q) = \displaystyle \frac{1}{N} \sum_{n = 1}^N || WQ - P(n)||_F^2,
	\label{Eqn: deviation}
\end{equation}
where $||\Pi||_F$ is the Frobenius norm of the matrix $\Pi$. Then, the problem can be formally stated as follows.
\begin{equation}
\begin{array}{ll}
	\displaystyle \minimize_{W,Q} & f(W, Q)
	\\[2ex]
	\st & \text{1. } W u_{2^L} = u_M
	\\[2ex]
	& \text{2. } Q u_K = u_{2^L}
	\\[2ex]
	& \text{3. } \boldsymbol{0}_{M \times 2^L} \leq W \leq \boldsymbol{1}_{M \times {2^L}}
	\\[2ex]
	& \text{4. } \boldsymbol{0}_{2^L \times K} \leq Q \leq \boldsymbol{1}_{2^L \times K}.
	\\[2ex]
	& \text{5. } W \text{ satisfies Equation \eqref{Eqn: W structure}.}
\end{array}
\tag{P}
\label{Problem: General}
\end{equation}

Let $\hat{Q}$ and $\hat{W}$ denote the solution to Problem \eqref{Problem: General}. Note that the above optimization problem is nonconvex since the objective function has a bilinear structure. 
%Furthermore, since the matrices $\hat{Q}$ and $\hat{W}$ are not symmetric, approximations based on semidefinite programming \cite{Vandenberghe1996, Luo2010} are not applicable in this context. 
%In this paper, we propose two efficient algorithms to estimate the factors $Q$ and $W$ approximately. In the first method, we present a one-shot estimation procedure based on convex-concave procedure. In the second method
We decompose Problem \eqref{Problem: General} into multiple stages and present an approximate learning procedure that is inspired from the sequential signaling policy employed by Alice. 
%In this paper, we propose two efficient algorithms to estimate the factors $Q$ and $W$ approximately. In the first method, we present a one-shot estimation procedure based on semidefinite relaxation of Problem \eqref{Problem: General}. In the second method, we decompose Problem \eqref{Problem: General} into multiple stages and present a sequential estimation procedure that is inspired from the sequential signaling policy employed by Alice. 

%================================================
%************************************************
%------------------------------------------------
%************************************************
%================================================

\section{Sequential Learning Algorithm \label{sec: Algorithm}}
%In this section, we will design and analyze the computational complexity of our proposed sequential learning algorithm. Given that Problem \ref{Problem: General} is non-convex, the proposed algorithm is suboptimal. 
For the sake of convenience, we employ the following notation in the rest of the paper. We represent the matrices $P$, $Q$ and $W$ as 
\begin{equation}
\begin{array}{lcr}
	P = \left[
	\begin{array}{c}
		\boldsymbol{p}_1^T
		\\
		\vdots
		\\
		\boldsymbol{p}_M^T
	\end{array}
	\right], &
	Q = \left[
	\begin{array}{c}
		\boldsymbol{q}_1^T
		\\
		\vdots
		\\
		\boldsymbol{q}_{2^L}^T
	\end{array}
	\right] \mbox{ and} &
	W = \left[
	\begin{array}{c}
		\boldsymbol{w}_1^T
		\\
		\vdots
		\\
		\boldsymbol{w}_M^T
	\end{array}
	\right],
\end{array}
\end{equation}
where $\boldsymbol{p}_m^T$,, $\boldsymbol{q}_m^T$ and $\boldsymbol{w}_m^T$ are the $m^{th}$ rows in $P$, $Q$ and $W$ respectively. 
%Furthermore, we denote the $(m,k)^{th}$ entry of the matrix $P_n$ as $p_{m,k}(n)$, for all $n = 1, \cdots, N$.

%\subsection{Algorithm Design \label{sec:alg-design}}
Given a message $\boldsymbol{m} \in \mathbb{M}$ based on some attribute subset $A \subseteq \mathbb{A}$, $P(\boldsymbol{\sigma} | \boldsymbol{m})$ depends on the terms $P(\boldsymbol{\sigma} | B)$ and $w(B | \boldsymbol{m})$, for all $B \subseteq A$. This motivates Alice to adopt the ordering of messages, which is identical to the ordering of alphabet subsets in Equation \eqref{Eqn: Attribute-Subset-Ordering}, so that the weight estimates from the previous stages can be used to reduce Problem \eqref{Problem: General} into a ordered sequence of $2^L$ convex programs, where each convex program (stage) corresponds to each message in the set of all possible messages at Alice. In the $i^{th}$ stage, Alice collects revealed preference data from Bob corresponding to the $i^{th}$ message $m_i$ in $\mathbb{M}$. 

\paragraph*{\textbf{Stage-1}} In the first stage of the algorithm, Alice's message is based out of a null subset of attributes. This is equivalent to the case where the agent does not provide any information regarding the attributes. In this special case, since $\boldsymbol{w}_1^T = [1 \ 0 \ \cdots \ 0]$, the first rows of both $P$ and $Q$ shall be identical. This can be illustrated in the following sub-problem:
%\begin{equation}
%\begin{array}{ll}
%	\displaystyle \minimize_{\{ q_{1,1}, \cdots, q_{1,K} \}} & \displaystyle \frac{1}{N} \sum_{n = 1}^N \sum_{k = 1}^{K} \left[ q_{1,k} - p_{1,k}(n) \right]^2
%	\\[3ex]
%	\st & \text{1. } \displaystyle \sum_{k = 1}^K q_{1,k} = 1
%	\\[3ex]
%	& \text{2. } 0 \leq q_{1,k} \leq 1, 
%	\\[1ex]
%	& \qquad \mbox{ for all } k = 1, \cdots, K.
%\end{array}
%\tag{P1}
%\label{Problem: Stage-1}
%\end{equation}

\begin{equation}
\begin{array}{ll}
	\displaystyle \minimize_{ \boldsymbol{q}_1} & \displaystyle \frac{1}{N} \sum_{n = 1}^N  || \boldsymbol{q}_1 - \boldsymbol{p}_1(n) ||^2
	\\[4ex]
	\st & \text{1. } \displaystyle \boldsymbol{q}_1^T u_K = 1
	\\[2ex]
	& \text{2. } \boldsymbol{0} \leq \boldsymbol{q}_1 \leq \boldsymbol{1}.
\end{array}
\tag{P1}
\label{Problem: Stage-1}
\end{equation}
Problem \ref{Problem: Stage-1} is a least-squares problem, and therefore, has a closed form solution which is given by
\begin{equation}
	\hat{\boldsymbol{q}}_1 = \displaystyle \frac{1}{N} \sum_{n = 1}^N \boldsymbol{p}_1(n).
	\label{Eqn: q-1-estimate}
\end{equation}

%Problem \ref{Problem: Stage-1} is a least-squares problem, and therefore, has a closed form solution which is given by
%\begin{equation}
%	\hat{q}_{1,k} = \displaystyle \frac{1}{N} \sum_{n = 1}^N p_{1,k}(n)
%\end{equation}
%for all $k = 1, \cdots, K$.

\paragraph*{\textbf{Stage-i}} Now, let us consider the $i^{th}$ stage where Alice collects Bob's revealed preferences conditioned on the message $m_i$, for any $i = 2, \cdots, 2^L$. In this stage, the choice model that is of interest is given by
\begin{equation}
	\boldsymbol{p}_i^T = \displaystyle \boldsymbol{w}_i^T Q
\end{equation}
for all $i = 2, \cdots, 2^L$. Due to the lower-triangular structure of the weight matrix $W$, $\boldsymbol{p}_i$ is reduced to
\begin{equation}
	\boldsymbol{p}_i = \displaystyle \sum_{j = 1}^{i-1} w_{i,j} \boldsymbol{q}_j + \left( 1 - \sum_{j = 1}^{i-1} w_{i,j} \right) \boldsymbol{q}_i.
\end{equation}
Note that, in the above model, the vectors $\boldsymbol{q}_1, \cdots, \boldsymbol{q}_{i-1}$ have been estimated in the previous stages. Therefore, we equivalently represent the above model as
\begin{equation}
	\boldsymbol{p}_i = \displaystyle \Phi_i \boldsymbol{x}_i,
	\label{Eqn: Linearized Choice Model - Intermediate Stage}
\end{equation}
where 
\begin{equation}
	\Phi_i = \displaystyle \left[
	\begin{array}{c c c c}
		\boldsymbol{q}_1 & \cdots & \boldsymbol{q}_{i-1} & I_K
	\end{array}
	\right],
\end{equation}
denotes the coefficient matrix (known at the current stage since we have estimates $\boldsymbol{\hat{q}}_1, \cdots, \boldsymbol{\hat{q}}_{i-1}$ from the previous stages) in the $i^{th}$ stage, and,
\begin{equation}
	\boldsymbol{x}_i = \displaystyle \left[ 
	\begin{array}{c}
		w_{i,1}
		\\[1ex]
		\vdots
		\\[1ex]
		w_{i,i-1}
		\\[1ex]
		\displaystyle \left( 1 - \sum_{j = 1}^{i-1} w_{i,j} \right) \boldsymbol{q}_i
	\end{array}
	\right]
\end{equation}
denotes the vector to be estimated in this stage.

While the above representation in Equation \eqref{Eqn: Linearized Choice Model - Intermediate Stage} seems linear, the nonlinearity is hidden in the bilinear term $\displaystyle \left( 1 - \sum_{j = 1}^{i-1} w_{i,j} \right) \boldsymbol{q}_i$ in $\boldsymbol{x}_i$. In our proposed approximate algorithm, we ignore this bilinear nature of $\boldsymbol{x}_i$ and evaluate the solution to the following linear program:
\begin{equation}
\begin{array}{ll}
	\displaystyle \minimize_{\boldsymbol{x}_i} & \displaystyle \frac{1}{N} \sum_{n = 1}^N || \hat{\Phi}_i \boldsymbol{x}_i - \boldsymbol{p}_i(n) ||^2
	\\[4ex]
	\st & \text{1. } \displaystyle \boldsymbol{u}_{K+i-1}^T  \boldsymbol{x}_i = 1
	\\[2ex]
	& \text{2. } \boldsymbol{0} \leq \boldsymbol{x}_i \leq \boldsymbol{1},
\end{array}
\tag{P2}
\label{Problem: Stage-i}
\end{equation}
where
$\hat{\Phi}_i = \displaystyle \left[
\begin{array}{c c c c}
	\hat{\boldsymbol{q}}_1 & \cdots & \hat{\boldsymbol{q}}_{i-1} & I_K
\end{array}
\right]$ is the coefficient matrix whose entries are estimated in the previous stages.

Let $\hat{\boldsymbol{x}}_i$ denote the solution to Problem \ref{Problem: Stage-i}. If $\hat{\Phi}_i^{-1}$ is the Moore–Penrose pseudo-inverse of the matrix $\hat{\Phi}_i$, $\hat{\boldsymbol{x}}_i$ is given by
\begin{equation}
	\hat{\boldsymbol{x}}_i = \displaystyle \frac{1}{N} \sum_{n = 1}^N \hat{\Phi}_i^{-1} \boldsymbol{p}_i(n). 
\end{equation}
If $\hat{\Phi}_i^{-1}$ cannot be computed, the problem can also be solved using efficient convex optimization algorithms. Given the solution vector $\hat{\boldsymbol{x}}_i$, we find the estimates of $\{w_{i,1}, \cdots, w_{i,i-1}\}$ and $\boldsymbol{q}_i$, as shown below.
\begin{eqnarray}
	\hat{w}_{i,j} & = & \hat{\boldsymbol{x}}_i(j), \mbox{ for all } j = 1, \cdots, i-1,
	\label{Eqn: w-i,j-estimate}
	\\[2ex]
	\hat{\boldsymbol{q}}_i & = & \displaystyle \frac{1}{1 - \displaystyle \sum_{j = 1}^{i-1} \hat{\boldsymbol{x}}_i(j)} \ \boldsymbol{x}_i(i:K+i-1)
	\label{Eqn: q-i-estimate}
\end{eqnarray}

%\subsection{Solution Analysis}

%In this subsection, 
Next, we analyze the existence of a unique solution to Problem \eqref{Problem: General}, and the optimality of our proposed algorithm.

\begin{lemma}
Problem \eqref{Problem: General} does not have a unique optimal solution. Furthermore, if $\mathcal{S}$ denotes the set of all possible optimal solutions to Problem \eqref{Problem: General}, then the solution $(\hat{W},\hat{Q})$ delivered by our proposed algorithm lies in $\mathcal{S}$.
\end{lemma}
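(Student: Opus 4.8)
\emph{Proof strategy.} The plan is to collapse the bilinear objective into a single least-squares term, read off the optimal set from that, and then check that the algorithm's output meets the resulting condition. Write $\bar{P}=\frac{1}{N}\sum_{n=1}^{N}P(n)$ for the sample mean of the data set. Expanding \eqref{Eqn: deviation} about $\bar{P}$ and using $\sum_{n}(P(n)-\bar{P})=\boldsymbol{0}$, the cross term cancels and $f(W,Q)=\|WQ-\bar{P}\|_F^2+\frac{1}{N}\sum_{n}\|P(n)-\bar{P}\|_F^2$, the second term being independent of $(W,Q)$. As a convex combination of stochastic matrices, $\bar{P}$ is stochastic, and the pair $(W,Q)=(I_{2^L},\bar{P})$ is feasible for \eqref{Problem: General}: $I_{2^L}$ has the lower-triangular shape of \eqref{Eqn: W structure} and its only nonzero entries sit at positions $(i,i)$, where $A_i\subseteq A_i$ holds trivially. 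Since this pair attains $WQ=\bar{P}$, the optimal value of \eqref{Problem: General} is $\frac{1}{N}\sum_{n}\|P(n)-\bar{P}\|_F^2$, and $\mathcal{S}$ is precisely the set of pairs $(W,Q)$ feasible for \eqref{Problem: General} with $WQ=\bar{P}$.

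To prove non-uniqueness it then suffices to exhibit a second element of $\mathcal{S}$. Because $A_1=\phi\subseteq A_2$, the entry $w_{2,1}$ may be made nonzero, so I would set $w'_{2,1}=\varepsilon$ and $w'_{2,2}=1-\varepsilon$, leave every other row of $W$ equal to that of $I_{2^L}$, keep $\boldsymbol{q}'_j=\bar{\boldsymbol{p}}_j$ for $j\neq2$, and put $\boldsymbol{q}'_2=(\bar{\boldsymbol{p}}_2-\varepsilon\,\bar{\boldsymbol{p}}_1)/(1-\varepsilon)$. A direct computation gives $W'Q'=\bar{P}$, while $(W',Q')\neq(I_{2^L},\bar{P})$; and for small $\varepsilon>0$ the vector $\boldsymbol{q}'_2$ is again a probability vector provided $\bar{\boldsymbol{p}}_2$ has full support. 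Hence $|\mathcal{S}|\geq2$.

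For the second assertion I would rerun the same decomposition inside each stage. Problem \eqref{Problem: Stage-1} minimizes $\|\boldsymbol{q}_1-\bar{\boldsymbol{p}}_1\|^2$ up to a constant, which yields $\hat{\boldsymbol{q}}_1=\bar{\boldsymbol{p}}_1$ as in \eqref{Eqn: q-1-estimate}, so the first row of $\hat{W}\hat{Q}$, namely $\hat{\boldsymbol{q}}_1^T$, matches that of $\bar{P}$. For $i\geq2$, Problem \eqref{Problem: Stage-i} is, up to an additive constant, the minimization of $\|\hat{\Phi}_i\boldsymbol{x}_i-\bar{\boldsymbol{p}}_i\|^2$ over its constraints (which I take to also retain the structural zeros $w_{i,j}=0$ for $A_j\not\subseteq A_i$, so that $\hat{W}$ stays feasible for \eqref{Problem: General}); since $\hat{\Phi}_i=[\hat{\boldsymbol{q}}_1\ \cdots\ \hat{\boldsymbol{q}}_{i-1}\ I_K]$ contains the block $I_K$, the feasible point $\boldsymbol{x}_i=(0,\dots,0,\bar{\boldsymbol{p}}_i)$ already drives the residual to zero, so the optimal value is zero and every optimal $\hat{\boldsymbol{x}}_i$ satisfies $\hat{\Phi}_i\hat{\boldsymbol{x}}_i=\bar{\boldsymbol{p}}_i$. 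Substituting the read-off formulas \eqref{Eqn: w-i,j-estimate}--\eqref{Eqn: q-i-estimate}, the $i$-th row of $\hat{W}\hat{Q}$ becomes $\sum_{j<i}\hat{\boldsymbol{x}}_i(j)\,\hat{\boldsymbol{q}}_j+\hat{\boldsymbol{x}}_i(i:K+i-1)=\hat{\Phi}_i\hat{\boldsymbol{x}}_i=\bar{\boldsymbol{p}}_i$, and the degenerate case $\sum_{j<i}\hat{\boldsymbol{x}}_i(j)=1$ (the last block is then $\boldsymbol{0}$ and $\hat{\boldsymbol{q}}_i$ is immaterial) causes no trouble. Hence $\hat{W}\hat{Q}=\bar{P}$, so $f(\hat{W},\hat{Q})$ equals the optimal value and $(\hat{W},\hat{Q})\in\mathcal{S}$.

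The one genuinely delicate step is the non-uniqueness claim, since the perturbation above breaks down on degenerate data where no $\bar{\boldsymbol{p}}_j$ has support contained in that of the row being perturbed; a fully general argument then has to either restrict to the generic full-support regime or observe that in such instances some coefficient $1-\sum_{j<i}w_{i,j}$ can be forced to zero, which leaves the corresponding stochastic block of $Q$ entirely free and produces a continuum of minimizers directly. Everything else --- the bias--variance split, feasibility of $(I_{2^L},\bar{P})$, and the per-stage identity between the $i$-th row of $\hat{W}\hat{Q}$ and $\hat{\Phi}_i\hat{\boldsymbol{x}}_i$ --- is routine bookkeeping.
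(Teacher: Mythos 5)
Your proof is correct in substance and establishes the same two key facts as the paper's, but by a noticeably cleaner route. The paper expands $f$ row by row into the terms $\tilde f_i$, asserts that the minimum of the sum equals the sum of the minima when the minimizations are ``carried out simultaneously,'' rewrites this as the system $\sum_{j<i} w_{i,j}\boldsymbol{q}_j+\bigl(1-\sum_{j<i}w_{i,j}\bigr)\boldsymbol{q}_i=\frac{1}{N}\sum_n \boldsymbol{p}_i(n)$, concludes non-uniqueness by counting variables versus equations, and declares the algorithm's output to be one candidate solution of that system. Your bias--variance split $f(W,Q)=\|WQ-\bar P\|_F^2+\text{const}$, the feasibility of $(I_{2^L},\bar P)$, the identification $\mathcal{S}=\{(W,Q)\ \text{feasible}: WQ=\bar P\}$, and the per-stage observation that the $I_K$ block in $\hat\Phi_i$ forces the stage optimum to satisfy $\hat\Phi_i\hat{\boldsymbol{x}}_i=\bar{\boldsymbol{p}}_i$ (hence $\hat W\hat Q=\bar P$ row by row, with the $\sum_{j<i}\hat{\boldsymbol{x}}_i(j)=1$ degeneracy and the structural zeros handled explicitly) make rigorous exactly what the paper leaves at the level of plausibility; and your explicit $\varepsilon$-perturbation of row $2$ replaces the paper's informal variable count with an actual second optimizer. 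What each buys: the paper's version is shorter, yours pins down the optimal set exactly and verifies membership of $(\hat W,\hat Q)$ without hand-waving.

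The one point that remains open is the degenerate-data case you flag, and your proposed rescue does not close it: forcing $1-\sum_{j<i}w_{i,j}=0$ in row $2$ means $w_{2,1}=1$, so that row of $WQ$ becomes $\bar{\boldsymbol{p}}_1$, which is optimal only when $\bar{\boldsymbol{p}}_2=\bar{\boldsymbol{p}}_1$, and the coefficient being zero does not free $\boldsymbol{q}_2$ from appearing in later rows either. In fact the non-uniqueness claim itself can fail for such data: with $L=1$, $K=2$, $\bar{\boldsymbol{p}}_1=(1,0)$, $\bar{\boldsymbol{p}}_2=(0,1)$, any optimizer must satisfy $w_{2,1}\bar{\boldsymbol{p}}_1+(1-w_{2,1})\boldsymbol{q}_2=\bar{\boldsymbol{p}}_2$, and non-negativity forces $w_{2,1}=0$ and $\boldsymbol{q}_2=(0,1)$, so the optimum is the single point $(I_{2},\bar P)$. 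This is a defect of the lemma as stated rather than of your construction --- the paper's variables-versus-equations count ignores the active non-negativity constraints and silently covers this case --- and under the full-support (generic) condition you state, your perturbation is valid and your argument goes through; it would be cleanest to state that condition as an explicit hypothesis rather than as a closing remark.
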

\begin{proof}
Consider the average deviation $f(W,Q)$, defined in Equation \eqref{Eqn: deviation}. Due to the lower-triangular structure of $W$ matrix, we find that $||WQ - P(n)||_F^2$ can be expanded as 
\begin{equation}
	\begin{array}{l}
		||WQ - P(n)||_F^2 
		\\[1ex]
		\quad = \displaystyle \sum_{i = 1}^M \left|\left|\displaystyle \sum_{j = 1}^{i-1} w_{i,j} \boldsymbol{q}_j + \left( 1 - \sum_{j = 1}^{i-1} w_{i,j} \right) \boldsymbol{q}_i - \boldsymbol{p}_i(n)\right|\right|^2.
	\end{array}
\end{equation}

Rearranging the order of summation in $f(W,Q)$, we have
\begin{equation}
	f(W,Q) = \displaystyle \sum_{i = 1}^M \tilde{f}_i(\boldsymbol{w}_i, \boldsymbol{q}_1, \cdots, \boldsymbol{q}_i),
\end{equation}
where
\begin{equation}
	\begin{array}{l}
		\tilde{f}_i(\boldsymbol{w}_i, \boldsymbol{q}_1, \cdots, \boldsymbol{q}_i)
		\\[1ex]
		= \displaystyle \frac{1}{N} \sum_{n = 1}^N \left|\left|\displaystyle \sum_{j = 1}^{i-1} w_{i,j} \boldsymbol{q}_j + \left( 1 - \sum_{j = 1}^{i-1} w_{i,j} \right) \boldsymbol{q}_i - \boldsymbol{p}_i(n)\right|\right|^2.
	\end{array}
\end{equation}

%The Bellman's equation to Problem \ref{Problem: General} is given by
%\begin{equation}
%\begin{array}{lcl}
%	g_i(\boldsymbol{w}_i, \boldsymbol{q}_1, \cdots, \boldsymbol{q}_i) & = & \displaystyle \min_{\boldsymbol{w}_i, \boldsymbol{q}_1, \cdots, \boldsymbol{q}_i} \left[ \tilde{f}(\boldsymbol{w}_i, \boldsymbol{q}_1, \cdots, \boldsymbol{q}_i) \right.
%	\\[2ex]
%	&& \qquad \quad \left. + g_{i+1} (\boldsymbol{w}_{i+1}, \boldsymbol{q}_{i+1}) \right],
%\end{array}
%\end{equation}
%for all $i = 1, \cdots, M$. Note that the dynamic programming approach derived from the above cost function presents the optimal solution to Problem \ref{Problem: General}.

Given that each term in the summation is non-negative in the above representation, we have
\begin{equation}
\begin{array}{l}
	\displaystyle \min_{W, Q} \ \sum_{i = 1}^M \tilde{f}_i(\boldsymbol{w}_i, \boldsymbol{q}_1, \cdots, \boldsymbol{q}_i)
	\\[2ex]
	\qquad \qquad = \ \displaystyle \sum_{i = 1}^M \min_{W, Q} \tilde{f}_i(\boldsymbol{w}_i, \boldsymbol{q}_1, \cdots, \boldsymbol{q}_i),
\end{array}
\label{Eqn: Simultaneous minimization}
\end{equation}
if all the problems within the summation on the right hand side of Equation \eqref{Eqn: Simultaneous minimization} are carried out  simultaneously. This is equivalent to the case where we solve the following system of equations simultaneously, where each equation corresponds to a minimized term in the summation: 
%Note that our algorithm is equivalent to solving all the stages at the same time as given by the upper bound in Equation \eqref{Eqn: Upper Bound}, since minimizing $\tilde{f}_i (\boldsymbol{w}_i, \boldsymbol{q}_1^*, \cdots, \boldsymbol{q}_{i-1}^*, \boldsymbol{q}_i)$ for all $i = 1, \cdots, M$ results in the following system of simultaneous equations:
\begin{equation}
	\boldsymbol{q}_1^* = \displaystyle \frac{1}{N} \sum_{n = 1}^N p_1(n),
\end{equation}
\begin{equation}
	\sum_{j = 1}^{i-1} w_{i,j}^* \boldsymbol{q}_j^* + \left( 1 - \sum_{j = 1}^{i-1} w_{i,j}^* \right) \boldsymbol{q}_i^* = \displaystyle \frac{1}{N} \sum_{n = 1}^N p_i(n),
\end{equation}
for all $i = 2, \cdots, M$. 

Since there are more variables than equations in the above system of simultaneous equations, we do not have a unique solution to this problem. Note that our proposed algorithm presents one of the solution candidates to the above system of simultaneous equations, since it presents
$$(\hat{\boldsymbol{w}}_i, \hat{\boldsymbol{q}}_i) = \displaystyle \argmin_{\boldsymbol{w}_i, \boldsymbol{q}_i} \ \tilde{f}_i(\boldsymbol{w}_i, \hat{\boldsymbol{q}}_1, \cdots, \hat{\boldsymbol{q}}_{i-1}, \boldsymbol{q}_i)$$
in the $i^{th}$ stage for all $i = 1, \cdots, M$.
\end{proof}

%While $\tilde{W}$ and $\tilde{Q}$ seem to be candidate solutions to our choice model when $\Lambda$ is a permutation matrix, the lower triangular restriction on the structure of $W$ eliminates this possibility. 

%\subsection{Computational Complexity}

Given that the size of the message set $\mathbb{M}$ increases exponentially with the number of attributes, the algorithm has exponential complexity in terms of the size of the attribute set. However, we have closed-form expressions to both $\hat{Q}$ and $\hat{W}$, as given in Equations \eqref{Eqn: q-1-estimate}, \eqref{Eqn: w-i,j-estimate} and \eqref{Eqn: q-i-estimate}. Furthermore, the hierarchical structure in the power set of $\mathbb{A}$ provides us with multiple non-interfering stages which reduces to $L$ (or, equivalently $\log_2 M$) effective stages with the aid of parallel processing. 
%For example, consider an attribute set with three attributes. In such a case, we can represent all the possible subsets of the attribute set $\{1,2,3\}$ using the hierarchical structure shown in Figure \ref{Fig: subset hierarchy}. Since the learning stages within each level do not interfere with each other, the algorithm can be effectively implemented in $L$ (or, equivalently $\log_2 M$) parallel super-stages.  

%================================================
%************************************************
%------------------------------------------------
%************************************************
%================================================

\section{Results and Discussion \label{sec: Results}}

Consider an example setting where there are $L = 2$ attributes, and therefore, a total of $M = 4$ possible message signals. We choose the stochastic matrices $Q$ and $W$ at random so that they satisfy the structural constraints. Having chosen the matrices $Q$ and $W$, we compute the $P$ matrix using our choice model in Equation \eqref{Eqn: choice-model-matrix-form}. We sample $C$ choices from $P$ matrix, which are distributed across $N$ bins to compute the input data $P(1), \cdots, P(N)$ to our algorithm. 

\begin{figure}[!t]
	\centering
    \includegraphics[trim={7mm 0 7mm 0}, clip, width=0.5\textwidth]{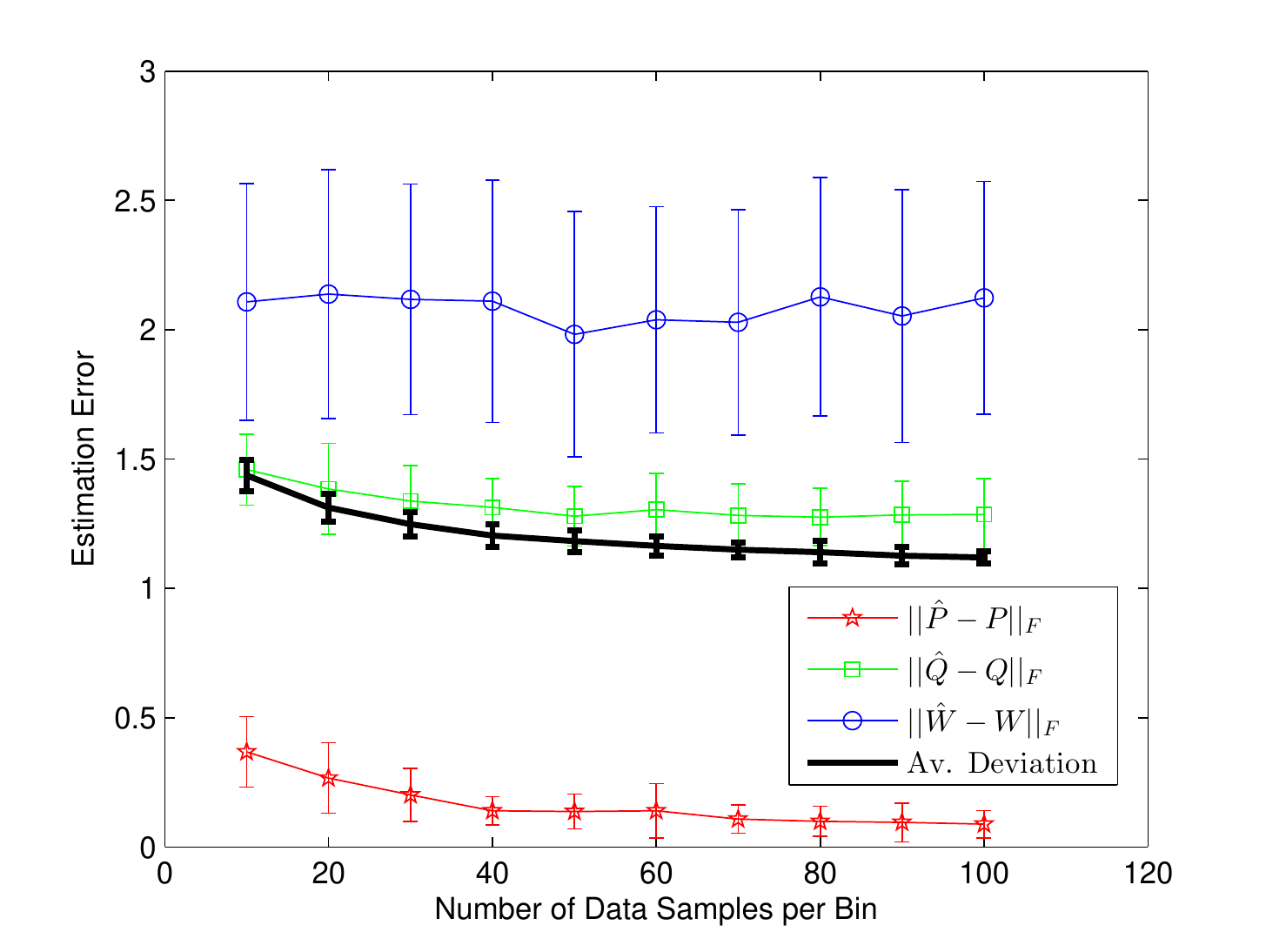}
    \caption{Average Deviation for increasing number of data samples, when $K = 5$, $L = 2$, $N = 5$ and 100 Monte-Carlo runs.}
    \label{Fig: Result-2}
\end{figure}

In Figure \ref{Fig: Result-2}, we run this experiment for different number of i.i.d. data samples per bin, when there are $N = 5$ bins, $K = 5$ choices and $L = 2$ attributes. We repeat this experiment over 100 Monte Carlo runs and plot the average deviation $f(\hat{W},\hat{Q})$, along with $||P - \hat{P}||_F$, $||Q - \hat{Q}||_F$ and $||W - \hat{W}||_F$, where $\hat{P} = \hat{W}\hat{Q}$. We plot the estimation error for increasing number of data samples (number of the agent's choice revelations). Note that the average deviation $f(W,Q)$ and the estimation errors of $Q$ and $W$ matrices decrease, but do not converge to zero. This phenomenon can be attributed to the fact that our matrix factorization framework in Problem \eqref{Problem: General} does not have a unique solution.

%\subsection*{Experiment Setup}
%
%\begin{itemize}
%	\item $K = 2 \ \Rightarrow \ M = 4$.
%	\item Total number of revealed preferences = 500.
%	\item Number of bins = 5 (Size of data in the regression procedure)
%	\item Number of Monte Carlo runs = 20.
%	\item Error metric: $||P - \hat{P}||_F$, $||Q - \hat{Q}||_F$ and $||W - \hat{W}||_F$, averaged over 100 Monte Carlo runs.
%\end{itemize}

%\subsection*{Comments and Observations}
%\begin{itemize}
%	\item Given the same number of data samples, we can agree that the performance of our algorithm deteriorates as the number of choices increases. This is something we observe in the case of both $P$ and $W$ matrices.
%	\item Counterintuitively, the error in estimating the $Q$ matrix improves with increasing number of choices. This is quite interesting, and demands more thought for justification. 
%	\item In addition, we also observe that the variance in the estimation error of $Q$ matrix reduces with the number of choices, while the variance in the estimation error of $W$ matrix remains unaffected. Can we improve the error performance in the estimation of $W$ matrix using a modified version of our current algorithm?
%	\item We need to check if further binning of data samples has any positive effect on the quality of $\hat{W}$.
%\end{itemize}

%================================================
%************************************************
%------------------------------------------------
%************************************************
%================================================
%\section{Conclusion \& Future Work}
In the future, we will study special structures of $Q$ and $W$ matrices that guarantee the existence of a unique solution to Problem \eqref{Problem: General}, since the average deviation $f(W,Q)$ does not converge to zero in general. Furthermore, we will also investigate persuasive mechanisms where the controller constructs signals based on the agent's estimated preferences to steer their decisions in a desired manner. We will also study the impact of information-framing, where a given message is presented relative to some reference.

%================================================
%************************************************
%------------------------------------------------
%************************************************
%================================================

\bibliographystyle{IEEEtran}
\bibliography{references}

\end{document}